\pdfoutput=1

\documentclass[11pt]{article}

\usepackage[final]{acl}

\usepackage{listings}
\usepackage{xcolor}
\definecolor{codegreen}{rgb}{0,0.6,0}
\definecolor{codegray}{rgb}{0.5,0.5,0.5}
\definecolor{codepurple}{rgb}{0.58,0,0.82}
\definecolor{backcolour}{rgb}{0.95,0.95,0.92}

\lstdefinestyle{mystyle}{
    backgroundcolor=\color{backcolour},   
    commentstyle=\color{codegreen},
    keywordstyle=\color{magenta},
    numberstyle=\tiny\color{codegray},
    stringstyle=\color{codepurple},
    basicstyle=\ttfamily\footnotesize,
    showspaces=false,                
    showstringspaces=false,
    showtabs=false,                  
    tabsize=2
}
\lstset{style=mystyle}

\usepackage{times}
\usepackage{latexsym}
\usepackage{subcaption}
\usepackage{amsmath}
\usepackage{amsfonts}
\usepackage{amsthm}
\usepackage{amssymb}
\usepackage{booktabs}
\usepackage{diagbox}
\usepackage[T1]{fontenc}
\usepackage[utf8]{inputenc}
\usepackage{microtype}
\usepackage{inconsolata}
\usepackage{graphicx}
\usepackage{multicol}
\usepackage{multirow}
\usepackage{hyperref}
\usepackage[most]{tcolorbox} 
\definecolor{block-gray}{gray}{0.95}
\newtcolorbox{zitat}[2][]{%
    colback=block-gray,
    grow to right by=0mm,
    grow to left by=0mm, 
    boxrule=0pt,
    boxsep=0pt,
    breakable,
    enhanced jigsaw,
    borderline west={4pt}{0pt}{gray},
    colbacktitle={block-gray},
    coltitle={black},
    fonttitle={\large\bfseries},
    attach title to upper={},
    #1,
}
\newtheorem{theorem}{Theorem}
\usepackage[skip=0.5\baselineskip]{caption}

\usepackage{xcolor,pifont}
\newcommand*\colourcheck[1]{%
  \expandafter\newcommand\csname #1check\endcsname{\textcolor{#1}{\ding{52}}}%
}
\newcommand*\colourx[1]{%
  \expandafter\newcommand\csname #1x\endcsname{\textcolor{#1}{\ding{55}}}%
}
\colourcheck{blue}
\colourcheck{green}
\colourcheck{red}
\colourx{blue}
\colourx{green}
\colourx{red}
\newcommand\blfootnote[1]{%
  \begingroup
  \renewcommand\thefootnote{}\footnote{#1}%
  \addtocounter{footnote}{-1}%
  \endgroup
}

\usepackage{enumitem}
\setitemize{noitemsep,topsep=0pt,parsep=0pt,partopsep=0pt,leftmargin=*}

\title{Learning LLM Preference over Intra-Dialogue Pairs: A Framework for Utterance-level Understandings}

\author{Xuanqing Liu$^*$, Luyang Kong$^*$, Wei Niu, Afshin Khashei, Belinda Zeng, \\ {\bf Steve Johnson, Jon Jay, Davor Golac, Matt Pope}\\
Amazon.com Inc.
}



\begin{document}
\maketitle
\begin{abstract}
Large language models (LLMs) have demonstrated remarkable capabilities in handling complex dialogue tasks without requiring use case-specific fine-tuning. However, analyzing live dialogues in real-time necessitates low-latency processing systems, making it impractical to deploy models with billions of parameters due to latency constraints. As a result, practitioners often prefer smaller models with millions of parameters, trained on high-quality, human-annotated datasets. Yet, curating such datasets is both time-consuming and costly. Consequently, there is a growing need to combine the scalability of LLM-generated labels with the precision of human annotations, enabling fine-tuned smaller models to achieve both higher speed and accuracy comparable to larger models. In this paper, we introduce a simple yet effective framework to address this challenge. Our approach is specifically designed for per-utterance classification problems, which encompass tasks such as intent detection, dialogue state tracking, and more. To mitigate the impact of labeling errors from LLMs -- the primary source of inaccuracies in student models -- we propose a noise-reduced preference learning loss. Experimental results demonstrate that our method significantly improves accuracy across utterance-level dialogue tasks, including sentiment detection (over $2\%$), dialogue act classification (over $1.5\%$), etc.\blfootnote{$^*$First two authors contributed equally. Corresponding author email: \texttt{xuanqing@amazon.com}}
\end{abstract}

\section{Introduction}

Maintaining high annotation quality, scaling the size of labeled datasets, and managing annotation budgets are three critical yet often conflicting objectives in deploying real-world ML applications. A widely adopted paradigm involves a two-stage process: unsupervised pretraining followed by supervised fine-tuning (e.g., \citealp{devlin2018bert, chen2020simple, he2020momentum, raffel2020exploring}). This approach effectively reduces the size of the labeled dataset required because, during the pretraining phase, models learn to generate universal embeddings across various modalities. Consequently, such pretrained models are often straightforward to adapt to downstream tasks.
\par
In dialogue understanding, moving beyond BERT-like models is essential, as dialogues possess unique characteristics compared to the BERT pretraining corpus (which primarily consists of books and web pages). These differences arise from several factors: First, dialogues involve spoken language exchanges between two or more individuals and are often structured differently, with one line per speaker. This format reduces the effectiveness of tasks such as masked token prediction and next-sentence prediction. Second, the vocabulary in daily dialogues tends to be informal. Finally, dialogues are frequently transcribed from voice recordings, introducing ASR errors and background noise. These distinctive properties have inspired research into developing specialized unsupervised pretraining algorithms for dialogue data (\citealp{mehri2019pretraining, zhong2022dialoglm, liu2022dial2vec, zhou2022learning}). Benchmark evaluations on common dialogue tasks -- such as intent detection, next-utterance prediction, summarization, dialogue act classification, and dialogue state tracking -- demonstrate the advantages of dialogue-optimized models. These models generally adhere to the classical BERT framework, pretraining on large-scale unsupervised dialogue datasets with dialogue-specific loss functions, including random mask filling, utterance swapping, and contrastive learning. However, it remains unclear whether such pretrained embedding models generalize effectively to specific downstream tasks.
\par
To address this challenge, we require direct supervision signals that are closely aligned with downstream tasks. This motivates the use of instruction fine-tuned LLMs as phase-2 supervision signals, while retaining traditional unsupervised pretraining as phase-1. However, simply employing LLMs as data labelers and fine-tuning a student model using traditional cross-entropy loss proves suboptimal. The accuracy of LLM-generated labels can be unpredictable, influenced by factors such as the quality of the LLM, the prompting strategy, and the inherent difficulty of the dialogue task. Consequently, the knowledge transferred from the LLM to the student model often deviates from the intended objective. This paper proposes an alternative approach based on preference learning, where pairs of chunks sampled from the same dialogue session (\emph{intra-session pairs}) are labeled by ensembled LLMs. Under reasonable assumption on LLM labeling errors, our method outperforms traditional training algorithms in both data efficiency and generalizability.
\section{Related work}
\subsection{Task-oriented dialogue (TOD) system}
Task-oriented dialogue understanding lies in the core of building AI assistants to be deployed in domain specific scenarios such as restaurant booking, self-service product troubleshooting, and so on. The objective is to help users achieve their goals in limited turns by understanding users' needs, tracking dialogue states and figure out next best action. Unique to TOD system, intent detection, dialogue act classification, and dialogue state tracking are three critical components of the system. Traditional approaches mostly rely on supervised learning on embedding models~\cite{liu2016attention}, by encoding dialogue contexts and employing deep neural networks such as RNN/LSTM or Transformers to infer utterance labels or slot values~\cite{barriere-etal-2022-opinions,btoasis,chen2020simple}. In the LLM age, there is a shift from finetuning TOD model for a specific domain~\cite{lei2018sequicity} to open domain in-context learning~\cite{hu2022context,arora2024intent}. Unfortunately, both solutions ignored latency and cost constraints in real-time, commercial products.

\subsection{Synthetic label prompting strategies and transfer learning}
These two techniques are the foundation of our solution. We discuss the main idea and prior works.
\par
\noindent\textbf{Prompting strategies}. It is often non-trivial prompting LLMs to achieve quality high data labeling. For example, prior work~\cite{anagnostidis2024susceptible,workrethinking,lu2021fantastically} noticed that few-shot prompting is surprisingly sensitive to factors including the number of example, order of examples, positive / negative sample ratio, or how similar those examples are to the actual input query. In this regard, fine-tuning embedding models on human curated labels are still preferred in production-ready applications. To strengthen the robustness of ICL, a promising solution is through diversified prompting~\cite{li2023self,song2024preference,song2024scaling}, either by starting with a few seeding prompts, and augment more versions using automated pipeline~\cite{wang2022self}, or repetitively refine the prompt from diverse perspectives~\cite{li2023dail}.
\par
\noindent\textbf{Transfer learning}. For better instruction following ability, a popular approach is fine-tuning on synthetic datasets produced by larger LLMs~\cite{taori2023stanford,chiang2023vicuna,xu2023wizardlm}. To foster LLM's reasoning ability, another line of work finetune with synthetic rationales collected from stronger LLMs~\cite{wang2022pinto,shridhar2023distilling,liu2023logicot,kang2024knowledge}.
Similar approach work for task-specific applications too, examples like dialogue generation~\cite{xu2023baize}, information extraction~\cite{josifoski2023exploiting,jeronymo2023inpars} and code generation~\cite{chaudhary2023code,roziere2023code}.
Our work focus on per-utterance multi-class classification in TOD system, assuming that even the most capable LLMs can't generate highly accurate labels, so a brand new transfer learning approach is required.
\section{Proposed framework}
\subsection{Problem scope}
We limit our scope to per-utterance classification, including sentiment detection, dialogue state tracking, dialogue act classification (Fig.~\ref{fig:illustrate-context-dependent}).

\begin{figure}[htb]
    \centering
    \begin{subfigure}{0.47\textwidth}
    \caption{\label{fig:example-intent-detection}Intent detection}
    \includegraphics[width=\textwidth]{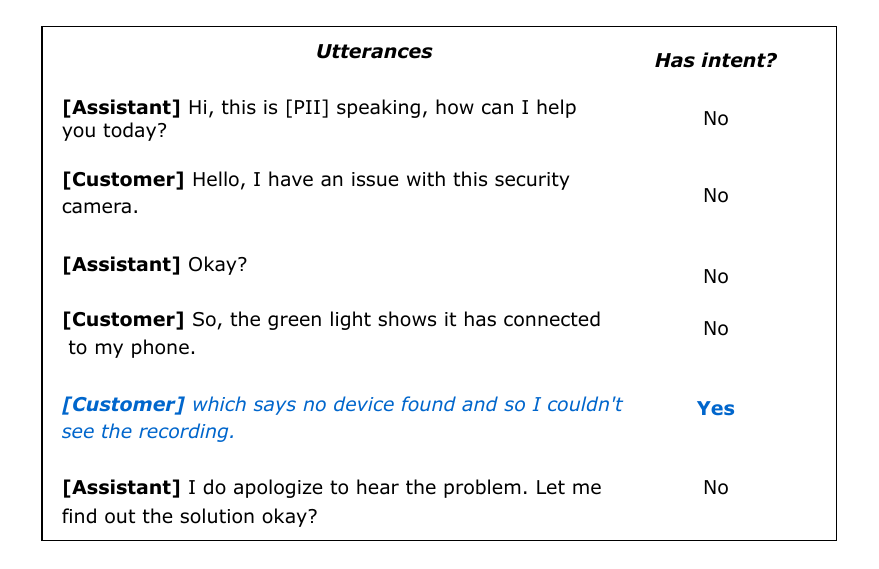}
    \end{subfigure}
    \par
    \begin{subfigure}{0.47\textwidth}
    \caption{\label{fig:example-da-classification}Dialogue act classification}
    \includegraphics[width=\textwidth]{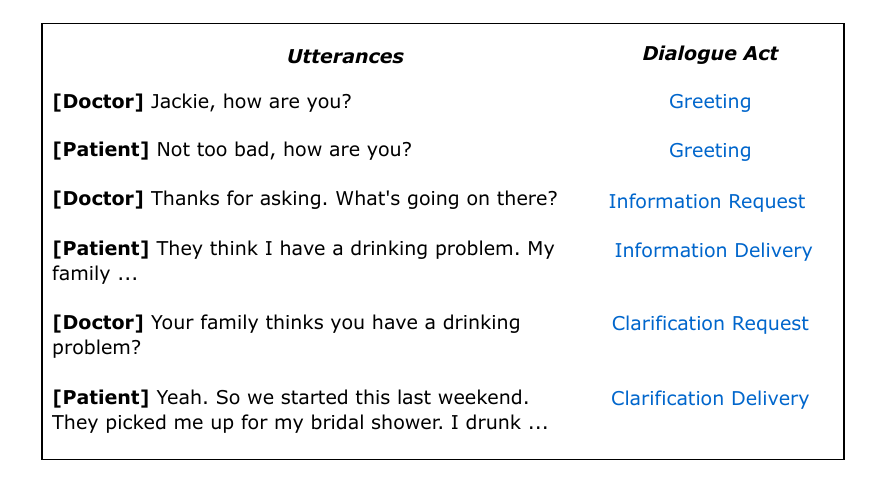}
    \end{subfigure}
    \par
    \begin{subfigure}{0.47\textwidth}
    \caption{\label{fig:example-dialogue-state-tracking}Dialogue state tracking}
    \includegraphics[width=\textwidth]{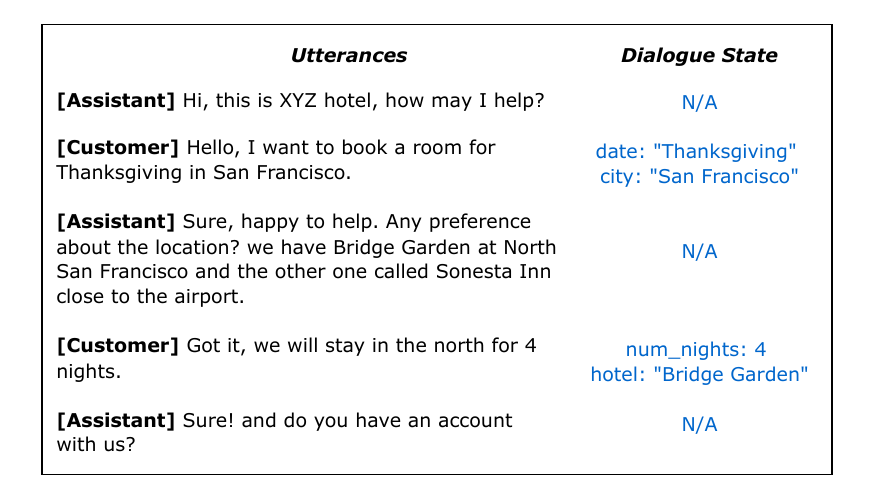}
    \end{subfigure}
    \caption{Illustrative examples of intent detection, dialogue act classification, and dialogue state tracking problems.}
    \label{fig:illustrate-context-dependent}
\end{figure}
\par
\noindent\textbf{Intent detection}. Each utterance is mapped to a binary label \texttt{has\_intent} ($y=1$) or \texttt{no\_intent} ($y=0$). Positive label means utterance deemed a valid intent (e.g. a question, issue, or complaint). Take customer support for example, we could apply intent detection model to monitor customer speech in real time and figure out whether a customer is seeking for help rather than chit-chatting.
\par
\noindent\textbf{Dialogue act classification}. We could regard this as an extension of intent detection from binary intent labels to multi-class acts. The objective of dialogue act classification is finding out the functions that utterances serve in dialogues -- such as commitments, questions, requests, replies, etc. In contact centers, for example, classifying dialogue acts can be valuable at providing appropriate and thoughtful responses to clients adhering to the dialogue acts.
\par
\noindent\textbf{Dialogue state tracking (DST)}. The objective of DST is extracting and picking up new information into dialogue state as the conversation evolves. This task has great potential in customer service as it not only provides intent types (e.g. \emph{hotel-booking} in Fig.~\ref{fig:example-dialogue-state-tracking}), but also identifies relevant semantic concepts throughout the slot filling process (e.g. \emph{location = San Francisco}).
\par
\noindent\textbf{Challenge.} When delivering real world applications driven by per-utterance classifiers, the challenges often rooted from obtaining high quality labels. For example, MultiWOZ~\cite{budzianowski2018multiwoz} is commonly used for benchmarking DST algorithms. Yet the original dataset contains numerous labeling errors, and it took $4$ future versions~\cite{eric2019multiwoz,zang2020multiwoz,han2021multiwoz,ye2021multiwoz} (MultiWOZ 2.1-2.4) to correct them. More importantly, we learned that a clean dataset not only ensures us precisely tracking the progress on good valid/test set, but also reduces the reliance on robust model training algorithms~\cite{ye2022assist}. The challenge of labeling leads us to focus on following question --
\begin{zitat}{}
\emph{Can we design a general solution for per-utterance classification problems, by jointly utilizing small amount of clean, human verified labels and almost unlimited amount of lower quality LLM annotations?}
\end{zitat}
We share a positive answer in the remainder of this work. Our work is not a simple extension of weakly supervised learning or noise-robust supervised learning, as we utilize characteristics that are unique to per-utterance classifications.


\subsection{Workflow}

\begin{figure*}[htb]
    \centering
    \includegraphics[width=0.95\textwidth]{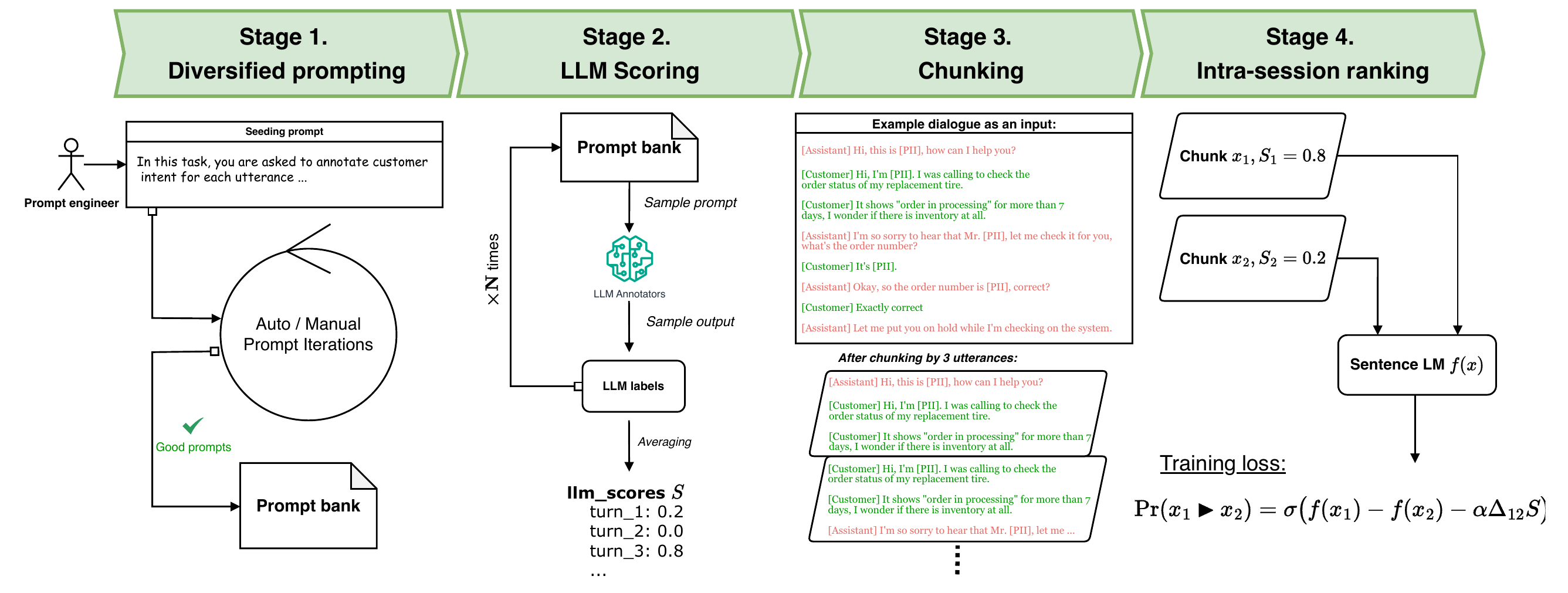}

    \caption{\label{fig:workflow-overview}Overview of our framework to train a small student model using noisy LLM supervision.}

    \label{fig:workflow}
\end{figure*}

Our workflow involves four stages. Goal of stage 1 is to construct a \emph{prompt bank} containing diversified prompts that performs well on data annotation work following prompt tuning strategies outlined in \citealt{schulhoff2024prompt,brown2020language,wei2022chain,yao2023treethoughtsdeliberateproblem,liu2021makes}. Predictions led by various prompts are slightly different, we ensemble the outputs together for better results~\cite{khalifa2023exploring,jiang-etal-2021-know}. Next, we further strengthen the ensemble effect at stage 2 using top-$K$/top-$P$ sampling. After repeated sampling $N$ times using LLM labeler, we compute $L$-dimensional score vector $S\in[0,1]^{L}$ for dialogue $\mathcal{D}$ containing $L$ utterances. Each element $0\le S_{i}\le 1$ is the ratio of positive LLM labels divided by $N$ (e.g. if $3$ in $10$ ensembles labeled $i$-th utterance as positive, $S_i=0.3$). For $C$-class classification problem, we transform it into $C$ one-versus-rest binary classification problems so the same framework still apply.
\par
After we collect LLM labeling scores $S$, we split a dialogue into multiple segments using a sliding window of stride $1$. 
We denote $x_i$ as the $i$-th segment covering $u_1$ to $u_i$. Finally in stage 4, we randomly sample two \emph{intra-session} segments $x_i$ and $x_j$ from the same dialogue and train a student model $f$ minimizing pair-wise ranking loss:

\begin{equation}\label{eq:loss-function}
    \ell(x_i, x_j)=\text{KL}\big(\mathbb{I}_{y_i\blacktriangleright y_j}  \parallel \mathrm{Pr}(x_i\blacktriangleright x_j)\big),
\end{equation}
where $\mathbb{I}_{y_i\blacktriangleright y_j}=1$ iff. $y_i=1$ and $y_j=0$ for binary labels; $\mathrm{Pr}(x_i\blacktriangleright x_j)$ is the probability of $x_i$ being more positive than $x_j$, modeled by network $f$ under an adaptive margin:

\begin{equation}\label{eq:margin-probability}
\mathrm{Pr}(x_i\blacktriangleright x_j) = \sigma\big(\Delta_{i,j} f - \alpha\cdot\Delta_{i,j} S\big),
\end{equation}
where $\sigma$ is the Sigmoid function, $\Delta_{i,j} f=f(x_i) - f(x_j)$ is the difference of model predicted scores and $\Delta_{i,j}S = S_i - S_j$ is the difference of LLM predicted scores between segment $i$ and $j$; $\alpha\in [0, 1]$ is a tunable hyper-parameter controlling margin. We train a student network $f$ over intra-session pairs to ensure: for any positive+negative pair labeled by LLM (positive $x_i$ \emph{vs.} negative $x_j$), the student network $f$ has the same preference as teacher LLM under margin $\alpha\cdot\Delta_{i,j}S$. This idea made two hidden assumptions: First assuming the LLM score $S$ is a good estimator of ground-truth correctness probability (\textit{aka.} confidence calibrated~\cite{pmlr-v70-guo17a}); secondly, single LLM labeler may be biased and high variance, their difference within same dialogue session $S_i-S_j$ carries dramatically lower bias and variance due to the differentiation. Therefore estimation error of $S_i-S_j$ is more precise than $S_i$ or $S_j$ alone. We discuss and verify two assumptions in the following sections. 

\subsection{Stage 1-2: How well are LLM scores calibrated to accuracy?}

A desirable property of LLM teacher is confidence scores $S$ calibrated to labeling accuracy, i.e. we expect higher true-positive rate if LLM score $S_i$ closes to one; and near zero true-positive rate if $S_i$ is closer to zero:

\begin{equation}\label{eq:def-calibrated}
\mathrm{Pr}(y_i=1 | S_i) = S_i.
\end{equation}

If Eq.~\eqref{eq:def-calibrated} is true, we could replace ground truth label $y_i$ with soft label $S_i$ without incurring additional gradient bias and variance (see Appendix~\ref{sec:app-proof-unbias-thm} for a proof). In addition, Eq.~\eqref{eq:def-calibrated} implies monotonicity relationship: 

\begin{equation}\label{eq:monotonic_s}
S_i>S_j \Longrightarrow \mathrm{Pr}(y_i=1) > \mathrm{Pr}(y_j=1).
\end{equation}

\cite{pmlr-v70-guo17a} showed that DNNs are uncalibrated, in that their accuracy falls behind confidence score (DNNs are over-confident). Same findings are reported in LLM world~\cite{kapoor2024calibration,huang2024calibrating}. Among various post-training solutions to calibrate DNNs (e.g. \cite{zadrozny2001obtaining,mozafari2018attended}), 
one simple and effective technique is ensemble different models~
\cite{lakshminarayanan2017simple} which integrates well with our workflow. Remaining question to be answered in this work is - 
\begin{zitat}{}
\emph{Does the same ensemble technique work for LLM predictions? If so, how many ensemble predictions we need to calibrate the scores?}
\end{zitat}
We design following experiment to answer this question: We sample an intent detection dataset containing around $600$ transcripts and binary \texttt{has\_intent} / \texttt{no\_intent} per-utterance labels. A labeling prompt optimized for \texttt{Claude3-sonnet}\footnote{Available at \href{https://docs.anthropic.com/en/api/messages}{Anthropic} and \href{https://docs.aws.amazon.com/bedrock/}{AWS Bedrock}.} for this task is provided in Appendix~\ref{sec:app-prompts-intent}. We apply the same prompt to ensemble sizes $n$ between $1$ and $30$. In each setting, we run LLM labeling on each input pair $\langle x_i, x_j\rangle$ for $n$ times and obtain scores $S_i$ and $S_j$ by averaging LLM predictions. Lastly, we partition the data by value $S_i$ into five buckets: $S_i\in(0.0, 0.2]$, $(0.2, 0.4]$, $(0.4, 0.6]$, $(0.6, 0.8]$, $(0.8, 1.0]$. Within each bucket, we compute the percentage of positive ground-truth labels.
We apply ECE loss, the standard metric to measure DNN calibration error~\cite{pmlr-v70-guo17a}:

\begin{equation}\label{eq:ECE-loss}
 \text{ECE} = \sum_{m=1}^{M}\frac{|B_m|}{N}\Big|\text{acc}(B_m) - \text{conf}(B_m)\Big|
\end{equation}

where $B_m$ is the $m$-th bucket partitioned by $S_i$. $\text{acc}(B_m)=\text{Pr}(y_i=1|s_i\in B_m)$ is the accuracy of $B_m$; and $\text{conf}(B_m)$ is the overall confidence score in $B_m$. Due to Eq.~\eqref{eq:def-calibrated} lower ECE metric means better calibration.
\begin{figure}[htb]
    \centering
    \includegraphics[width=0.9\linewidth]{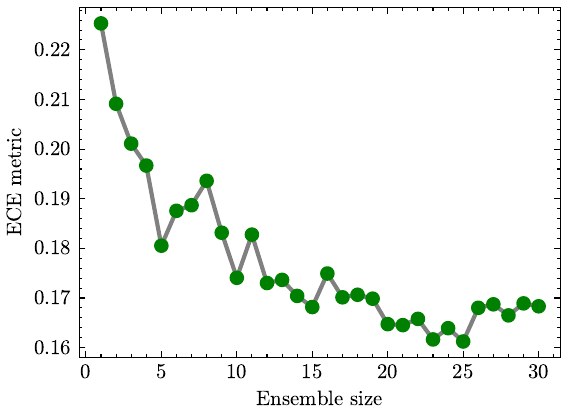}
    \caption{\label{fig:ece-loss-curve}Visualizing the downward trend of ECE loss as ensemble size increases from 1 to 30.}
\end{figure}
Despite some random fluctuations, we could observe in Fig.~\ref{fig:ece-loss-curve} a decline in ECE loss ($0.22\searrow 0.17$) as ensemble size increases.
\begin{zitat}{}
\emph{The ensemble technique in Stage 1-2 effectively calibrates LLM scores $S_i$ by introducing fewer gradient biases and variances. Therefore LLM teacher supervisions are good surrogate for ground-truth labels.}
\end{zitat}
\subsection{Stage 3-4: Overcoming distribution shifts by intra-session comparison}
We generate ranking pairs in a novel way: we sample two chunks for ranking from the same conversation (\emph{intra-session pairs}), instead of different conversations. We make two hypothesis ($H_1$ and $H_2$) explaining why intra-session pairs are more powerful.
\par
\noindent\textbf{{\color{blue}$H_1$}: Intra-session pairs are harder}. Two chunks sampled from same dialogue are similar in the context (sharing the same topic with overlapping context). As a result, it is harder to tell which chunk is positive label against the other. Once training a student model on top of hard pairs, it forces the model to learn more discriminative textual features from text input, rather than just replying on some keywords. Those intra-session pairs lead to better generalization.
\par
\noindent\textbf{{\color{blue}$H_2$}: LLM labeling errors are canceled by the differentiator}. This hypothesis is more conceptually involved: LLM labeling errors are not uniformly random across all data, instead they cluster on certain type of transcripts. For example, some scenarios are not mentioned in the labeling prompt so LLM has to guess, resulting in more errors in such cases. Fortunately, this type of error typically condensed to certain dialogues, equivalent to a ``shifting'' effect to the label distribution. By sampling a pair ($x_i$ and $x_j$) from the same dialogue, their corresponding LLM scores ($S_i$ and $S_j$) are drifted to roughly the same extent. In the end, the margin of the loss function~\eqref{eq:loss-function} $\Delta_{ij} S = S_i - S_j$ still accurately tracking ground-truth label difference $y_i - y_j$.
\par
\begin{figure}[htb]
    \centering
    \includegraphics[width=0.9\linewidth]{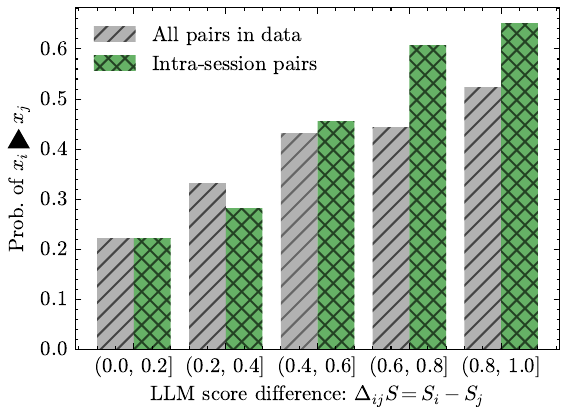}
    \caption{Comparing the correlations between LLM score difference (also the margin of training loss) \emph{w.r.t.} the probability of one label is more positive than the other. We also include linear fittings to both groups.}
    \label{fig:stylized_fact}
\end{figure}
We design an experiment to validate $H_2$ on two groups: the control group consists of pairs sampled from different dialogues; experimental group consists of pairs sampled from same dialogue. The goal is checking correlation between $\Delta_{ij} S=S_i-S_j$ with the probability of $y_i=1$ and $y_j=0$ ($y_i>y_j$ in binary case). We follow the same bucketizing method as previous experiment (5 buckets). We count the percent of $y_i>y_j$ cases in each bucket and each group. Result in Fig.~\ref{fig:stylized_fact} shows the ground-truth probability of $y_i>y_j$ more sensitive to $\Delta_{ij} S$ in experimental group than control group. Meaning that our intra-session pairs are indeed less noisy, and a better approximation of golden supervision signal $y_i-y_j$. 

\section{Experiments}
\noindent\textbf{Datasets.} We benchmark our method on three important tasks in task-oriented dialogues (TOD): intent/sentiment-detection, dialogue act classification, and dialogue state tracking. We benchmark intent/sentiment detection on MELD~\cite{poria-etal-2019-meld} and SILICONE~\cite{busso10interactive}; benchmark dialogue act classification on daily-dialog~\cite{li2017dailydialog}, MRDA~\cite{shriberg2004icsi}, BT-OASIS~\cite{btoasis} and dyda\_da~\cite{chapuis-etal-2020-hierarchical}; benchmark dialogue state tracking on SGD~\cite{rastogi2020towards} and MultiWOZ-2.2~\cite{zang2020multiwoz}. We put statistics and other details of datasets in Appendix~\ref{app:summary-dataset}.
\par
\noindent\textbf{Baselines.}
We want to see how the accuracy change after plugging our workflow into some strong models. We select following baselines accordingly:
\begin{itemize}
\item \emph{Claude3-Sonnet}: We pick this model as a strong baseline for measuring LLM annotator performance.
\item \emph{FnCTOD}~\cite{li2024large}: A recent prompting strategy achieving strong results on dialogue state tracking task.
\item \emph{ToD-BERT}~\cite{wu2020tod}: A strong baseline for dialogue pretrained small embedding model. This is also the backbone model of our method.
\item \emph{FLAN-T5}~\cite{chung2024scaling}: T5-XXL fine-tuned on large-scale instructions data including MultiWOZ. We include this model as a natural baseline for fine-tuned LLM on TOD datasets.
\end{itemize}
We summarize features of all baselines with our method in Table~\ref{tab:characters} of Appendix~\ref{app:compare-model-features}.

\subsection{Comparing pairwise preference learning \emph{vs.} pointwise knowledge transfer}

\begin{table}[htb]
    \centering
    \scalebox{0.8}{
    \begin{tabular}{l|ccccc}
    \toprule
    \diagbox[width=10em]{Approach}{$\%$ gold labels} &  $0\%$  & $1\%$  & $5\%$  & $10\%$  & $25\%$ \\
    \midrule
    Finetune-only        & - & $27.3$ & $29.5$ & $34.7$ &  $69.6$ \\
    \midrule
    \multicolumn{6}{c}{\textit{Supervised pretrain $\rightarrow$ Finetune}}\\
    Pointwise pretrain & - & $31.8$ & $33.4$ & $47.2$ &  $77.3$ \\   
    Pairwise pretrain  & - & $38.4$ & $45.8$ & $52.1$ &  $78.4$ \\
    \bottomrule
    \end{tabular}}
    \caption{Effective of our approach under various amount of labeled data.}
    \label{tab:downsample-data-exper}
\end{table}
To evaluate the transition from pointwise model distillation to pairwise preference learning, we compare the intent detection accuracy of the ToD-BERT model fine-tuned using three approaches: 1) fine-tuning directly on human-labeled data; 2) supervised pretraining with pointwise LLM-generated labels followed by fine-tuning on human-labeled data; and 3) supervised pretraining with pairwise LLM-generated labels followed by fine-tuning on human-labeled data. To assess the impact of data scaling, we vary the sampling ratios during evaluation. Table~\ref{tab:downsample-data-exper} consistently shows that models leveraging pairwise supervised pretraining outperform the alternatives, particularly in low-data regimes.

\subsection{Sentiment detection}
Next we benchmark our method with baselines on two sentiment detection datasets. We report classification accuracy over all sentiments defined in each datasets. The results are shown in Table~\ref{tab:bench_intent_sentiment}. Comparing with ToD-BERT (finetuned directly on human labeled data) and FnCTOD (finetuned on LLM synthetic data), our approach (supervised pretrained on LLM synthetic data using pairwise loss then finetuned on human labeled data) performs better than baselines by around $2\%$ to $8\%$.
\begin{table}[htb]
    \centering
    \scalebox{0.7}{
    \begin{tabular}{l|ccccc}
    \toprule
    Datasets &  Claude & FnCTOD & ToD-BERT & FLAN-T5  & \textbf{Ours}     \\
    \midrule
    MELD     & 74.25 &  68.84 & 80.30 &  75.72 & 88.09  \\
    IEMOCAP  & 76.39 &  61.30 & 87.88 &  82.62 & 90.31  \\
    \bottomrule
    \end{tabular}
    }
    \caption{Benchmarking intent/sentiment detection task.}
    \label{tab:bench_intent_sentiment}
\end{table}
\subsection{Dialogue act classification}
Similarly, we benchmark our method against baselines on dialogue act classification problem. Note we adopted the same backbone model as ToD-BERT, and ToD-BERT is still the strongest baseline in this task. Our model out-performed ToD-BERT by around $1.5\%$ to $10\%$.
\begin{table}[htb]
    \centering
    \scalebox{0.7}{
    \begin{tabular}{l|ccccc}
    \toprule
    Datasets &  Claude & FnCTOD & ToD-BERT & FLAN-T5  & \textbf{Ours}    \\
    \midrule
    DailyDialog & 70.39 & 66.03 & 72.40 & 68.08 & 76.50    \\
    MRDA        & 62.82 & 81.93 & 88.4 & 60.47 & 89.95    \\
    dyda\_da    & 71.25 & 74.82 & 79.14 & 68.66 & 85.11    \\
    BT-Oasis    & 32.85 & 52.76 & 59.24 & 17.13 & 69.62    \\
    \bottomrule
    \end{tabular}
    }
    \caption{Benchmarking dialogue act classification task.}
    \label{tab:bench_dialogue_act}
\end{table}
\subsection{Dialogue state tracking}
Finally, we benchmark on two dialogue state tracking (DST) datasets, SGD and MultiWOZ-2.1. In this experiment we benchmark the accuracy of joint prediction of slot/domain/values (aka. \textbf{Joint-Acc}). The results are shown in Figure~\ref{tab:bench_dialogue_state}.
\begin{table}[htb]
    \centering
    \scalebox{0.7}{
    \begin{tabular}{l|ccccc}
    \toprule
    Datasets      &  Claude & FnCTOD & ToD-BERT & FLAN-T5  & \textbf{Ours}    \\
    \midrule
    SGD    & 60.7 & 63.9 & 42.5 & -- & 47.3 \\
    MultiWOZ  & 27.0 & 37.9 & 16.4 & --  & 25.5  \\
    \bottomrule
    \end{tabular}
    }
    \caption{Benchmarking dialogue state tracking task.}
    \label{tab:bench_dialogue_state}
\end{table}
\section{Discussion and future work}

This paper presents a novel approach to minimizing human effort in labeling high-quality data for a class of per-utterance classification problems. Our method moves beyond traditional LLM labeling and knowledge transfer to student models by leveraging a preference learning and pairwise ranking framework. This framework has been demonstrated to be both theoretically and empirically robust against LLM labeling errors. An intriguing future direction would be to extend this approach to reward model training in reinforcement learning with human feedback (RLHF), another critical domain characterized by noisy labels and the need for robust discriminative model training.

\clearpage
\bibliography{custom}
\appendix
\section{\label{app:summary-dataset}Summary statistics of experiment datasets}
\begin{table}[htb]
    \centering
    \resizebox{\columnwidth}{!}{
    \begin{tabular}{l|lll}
    \toprule
    Data  & \#Classes & \#Dialogues & \#Utterances \\
    \midrule
    \multicolumn{4}{l}{\it Intent/Sentiment detection} \\
    MELD        &  $3$  &   $1,400$   &  $13,000$ \\
    IEMOCAP     &  $6$  &   $151$   &   $10,039$ \\
    \midrule
    \multicolumn{4}{l}{\it Dialogue act classification} \\
    DailyDialog &  $5$  &   $13,118$   &  $103,630$         \\
    MRDA        &  $5$  &   $75$   &  $108,202$         \\
    dyda\_da    &  $4$  &   $87170$   &  $102,000$         \\
    BT-Oasis    & $42$ &  $636$ & $15,067$    \\
    \midrule
    \multicolumn{4}{l}{\it Dialogue state tracking}                \\
    SGD          & $53$ (slots)  &   $16,142$  &  $329,964$        \\
    MultiWOZ-2.1 & $24$ (slots)  &   $8,438$   &  $42,190$         \\
    \bottomrule
    \end{tabular}
    }
    \caption{Datasets for each evaluation task and some statistics.}
    \label{tab:dataset_info}
\end{table}

\section{\label{app:compare-model-features}Comparing features of baseline models and our method}
\begin{table}[htb]
    \centering
    \scalebox{0.8}{
    \begin{tabular}{lccc}
    \toprule
    Methods     &  TOD finetuned?  &  LLM distilled  &  Small size  \\
    \midrule
    Claude & (unknown) & \redx & \redx \\
    FnCTOD & \redx & \greencheck & \redx \\
    ToD-BERT & \greencheck & \redx &   \greencheck \\
    FLAN-T5  & \greencheck &  \redx &   \redx \\
    Ours &  \greencheck & \greencheck & \greencheck    \\
    \bottomrule
    \end{tabular}
    }
    \caption{Comparing baselines and our method along three dimension: TOD finetuned means whether the model is finetuned for TOD tasks; LLM distilled indicates the model is distilled from (imperfect) LLM synthetic labels; Small size means whether the actual inference model is small footprint.}
    \label{tab:characters}
\end{table}

\section{Sample prompts for Claude}
\label{sec:app-prompts-trigger}

Prompt for daily-dialogue:
\begin{lstlisting}[language=HTML]
Dialogue: 
{dialogue}

Last utterance:
{last_utterance}

What's the best dialogue act of the last 
utterance? 
Choose from below without further 
explain:

Options:
A. Inform
B. Question
C. Directive
D. Commissive
E. None of above

A valid output should be one of: A, B, C, 
D, or E
Do not output anything else.
\end{lstlisting}

Prompt for MRDA:
\begin{lstlisting}[language=HTML]
Dialogue: 
{dialogue}

Last utterance:
{last_utterance}

What's the best dialogue act of the last 
utterance? Choose from below without 
further explain:

Options:
A. Statement or subjective statement
B. Declarative question
C. Backchannel
D. Follow-me
E. Question

A valid output should be one of: A, B, C, 
D, or E
Do not output anything else.
\end{lstlisting}

Prompt for MELD:
\begin{lstlisting}[language=HTML]
## Task Description

In this task you will receive a short 
dialogue. Your goal is to read the whole 
dialogue, understand the sentiment of 
each utterances, and pick out the utter-
ances with positive sentiment.

## Output format

You need to copy each positive sentiment 
utterances to an json array together 
with the initial line number.

## Example

Input:

1 [Phoebe] Oh my God, he's lost it. He's 
totally lost it. 
2 [Monica] What?
3 [Ross] Or! Or, we could go to the bank, 
close our accounts and cut them off at 
the source.
4 [Chandler] You're a genius!
5 [Joey] Aww, man, now we won't be bank 
buddies!
6 [Chandler] Now, there's two reasons.
7 [Phoebe] Hey.
8 [All] Hey!
9 [Phoebe] Ohh, you guys, remember that 
cute client I told you about? I bit him.
10 [Rachel] Where?!
11 [Phoebe] On the touchy.

Correct output:
```json
{
    "positive_utterances": [
        "4 [Chandler] You're a genius!",
        "8 [All] Hey!"
    ]
}
```
\end{lstlisting}

\section{Sample prompts for FLAN-T5}
\label{sec:flan-app-prompts-trigger}

Prompt for daily-dialogue:
\begin{lstlisting}[language=HTML]
Dialogue: 
{dialogue}

Last utterance:
{last_utterance}

What's the best dialogue act of the last
 utterance?

Options:
A. Inform
B. Question
C. Directive
D. Commissive
E. None of above
\end{lstlisting}

Prompt for MRDA:
\begin{lstlisting}[language=HTML]
Dialogue: 
{dialogue}

Last utterance:
{last_utterance}

What's the best dialogue act of the last
utterance? Choose from below without
further explain:

Options:
A. Statement or subjective statement
B. Declarative question
C. Backchannel
D. Follow-me
E. Question

Answer:
\end{lstlisting}

Prompt for MELD:
\begin{lstlisting}[language=HTML]
Dialogue: 
{dialogue}

Last utterance:
{last_utterance}

Is the last utterance in positive 
sentiment? Choose "Yes" or "No".
\end{lstlisting}

\section{Intent detection labeling prompt\label{sec:app-prompts-intent}}
\begin{lstlisting}[language=HTML]
# Task description
You are given a conversation between user
and assistant. Typically, the user has 
some questions / issues / complaints. 
Your goal is to find out the utterance
containing the user intent.

# Data description
Each line of the conversation corresponds
to an utterance. You can see the speaker
from according to the beginning of each
line. For example:

```
[assistant] Hi, my name is [PII], thank
you for calling [COMPANY].
[user] Hi, I'm calling because the
shippment arrived damaged and I need a
replacement.        
[assistant] I see, I'm sorry to hear
your bad experience about shippment. 
```

Here the user intent is "Hi, I'm calling
because the shippment arrived damaged
and I need a replacement.".

Now it is your turn, read the
conversation thoroughly and find out all
intent utterances

Conversation:
{conversation}
\end{lstlisting}

\section{Proof of Unbiased Gradients}\label{sec:app-proof-unbias-thm}

\begin{theorem}
Suppose dataset $\{(x_i, y_i)\}$ has binary labels $y_i\in\{0, 1\}$. If we only have access to noise-corrupted soft labels $\{x_i, \hat{y}_i\}$, $\hat{y}_i\in[0, 1]$ where the noisy labels follow the property $\mathrm{Pr}(y_i=1|\hat{y}_i)=\hat{y}_i$ (perfect confidence calibration). Then if we train a linear classifier $f_{\theta}(x) = \sigma(\theta^Tx)$ on corrupted dataset the gradients of cross-entropy loss over parameters $\theta$ are unbiased.
\end{theorem}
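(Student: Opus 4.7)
The plan is to exploit the well-known closed form of the logistic-regression gradient, so that the expected gradient depends on the labels only through a conditional expectation, and then apply the calibration hypothesis directly. First I would write the per-example cross-entropy loss with a soft target $\hat{y}_i$,
\begin{equation*}
\ell(x_i,\hat{y}_i;\theta) = -\hat{y}_i \log \sigma(\theta^{\top} x_i) - (1-\hat{y}_i)\log\bigl(1-\sigma(\theta^{\top} x_i)\bigr),
\end{equation*}
and differentiate to obtain the standard identity
\begin{equation*}
\nabla_{\theta}\,\ell(x_i,\hat{y}_i;\theta) = \bigl(\sigma(\theta^{\top} x_i) - \hat{y}_i\bigr)\,x_i.
\end{equation*}
The analogous expression with the clean label $y_i$ plugged in would be $(\sigma(\theta^{\top}x_i)-y_i)x_i$, so the gradient bias at a fixed parameter $\theta$ is
\begin{equation*}
b(\theta) \;=\; \mathbb{E}\bigl[\nabla_{\theta}\,\ell(x,\hat{y};\theta)\bigr] - \mathbb{E}\bigl[\nabla_{\theta}\,\ell(x,y;\theta)\bigr] \;=\; \mathbb{E}\bigl[(y-\hat{y})\,x\bigr],
\end{equation*}
and the whole theorem reduces to showing $b(\theta)=0$.

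Next I would tower the expectation on the pair $(x,\hat{y})$ and pull $x$ out, since it is measurable with respect to the inner sigma-algebra:
\begin{equation*}
\mathbb{E}\bigl[(y-\hat{y})\,x\bigr] \;=\; \mathbb{E}\Bigl[x\,\bigl(\mathbb{E}[\,y\mid x,\hat{y}\,]-\hat{y}\bigr)\Bigr].
\end{equation*}
Here I would invoke the calibration assumption $\Pr(y=1\mid \hat{y})=\hat{y}$, which gives $\mathbb{E}[y\mid \hat{y}]=\hat{y}$. Under the standard reading that $\hat{y}$ is a \emph{sufficient statistic} for $y$ with respect to the input $x$ (i.e.\ the soft label encapsulates everything the teacher conveys about $y$, so $y\perp x\mid \hat{y}$), this upgrades to $\mathbb{E}[y\mid x,\hat{y}] = \hat{y}$, making the inner parenthesis vanish almost surely and hence $b(\theta)=0$.

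The main obstacle is precisely this last step: the hypothesis as written only controls $\Pr(y=1\mid \hat{y})$, not $\Pr(y=1\mid x,\hat{y})$, so without an auxiliary conditional-independence (or equivalently, without interpreting ``perfect calibration'' as conditional on the whole observation $(x,\hat{y})$) the tower-expectation argument stalls. I would therefore state the $y\perp x\mid \hat{y}$ assumption explicitly as part of the theorem's hypotheses (or, equivalently, strengthen the calibration statement to $\Pr(y=1\mid x,\hat{y})=\hat{y}$, which is the natural reading in the ensemble-calibration setting established in the previous section). The remaining computation is routine: linearity of expectation lifts the per-example unbiasedness to the mini-batch gradient, and because the statement only requires unbiasedness (not variance reduction), the proof terminates here. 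Note that linearity of $f_{\theta}$ is not actually used beyond producing the clean gradient formula, so the same argument applies to any smooth model paired with cross-entropy loss; I would flag this as a remark rather than extend the claim.
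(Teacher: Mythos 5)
Your proposal is correct and follows essentially the same route as the paper's own proof: the closed-form cross-entropy gradient $\bigl(\sigma(\theta^{\top}x_i)-\hat{y}_i\bigr)x_i$, then the calibration identity $\mathbb{E}[y_i\mid\hat{y}_i]=\hat{y}_i$ combined with the tower property to match expectations. The one substantive difference is how the expectation is set up. The paper argues per example, holding $x_i$ fixed and averaging only over the label randomness, so the factor $x_i$ pulls out of the expectation and $\mathbb{E}[\hat{y}_i]=\mathbb{E}[\mathbb{E}[y_i\mid\hat{y}_i]]=\mathbb{E}[y_i]$ finishes the argument; in that framing, conditioning on $\hat{y}_i$ is effectively conditioning on $(x_i,\hat{y}_i)$, so the issue you raise never surfaces explicitly — it is absorbed into the per-example reading of the calibration hypothesis. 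Your version instead takes the expectation over the joint law of $(x,y,\hat{y})$, and there your observation is accurate: $\Pr(y=1\mid\hat{y})=\hat{y}$ alone does not give $\mathbb{E}[y\mid x,\hat{y}]=\hat{y}$, so one needs either $y\perp x\mid\hat{y}$ or the strengthened calibration $\Pr(y=1\mid x,\hat{y})=\hat{y}$, exactly the auxiliary hypothesis you propose to state. Making that explicit is a genuine gain in rigor over the paper, which relies on the per-input reading silently; your closing remark that linearity of the classifier is never really used (only the form of the sigmoid/cross-entropy gradient) is also correct and applies equally to the paper's proof.
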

\begin{proof}
Training on corrupted dataset $\{x_i, \hat{y}_i\}$ using cross-entropy loss with linear model, we have the loss function:
\begin{equation}
    \label{eq:ce-loss}
    \begin{aligned}
    &L\big(\theta; (x_i, \hat{y}_i)\big)\\
    &=-\hat{y}_i\log\big(f_{\theta}(x_i)\big) - (1-\hat{y}_i)\log\big(1-f_{\theta}(x_i)\big)
    \end{aligned}
\end{equation}
If we compute the gradients of loss over parameters $\theta$:
\begin{equation}\label{eq:gradients}
\frac{\partial}{\partial \theta} L\big(\theta; (x_i, \hat{y}_i)\big)=\big(f_{\theta}(x_i)-\hat{y}_i\big)x_i.
\end{equation}
If we take the expectation over randomness of $\hat{y}_i$ on both sides of Eq.~\eqref{eq:gradients}, we can further get
\begin{equation}\label{eq:gradient-expansion}
\begin{aligned}
&\mathbb{E}\left[\frac{\partial}{\partial \theta}L(\theta; (x_i, \hat{y}_i))\right] \\
&=\big(f_{\theta}(x_i) - \mathbb{E}[\hat{y}_i]\big)x_i.
\end{aligned}
\end{equation}
Furthermore, due to the calibration of $\hat{y}_i$, $\mathrm{Pr}(y_i=1|\hat{y}_i)=\hat{y}_i$, we have that
\begin{equation}\label{eq:chain-exp}
\hat{y}_i=\mathrm{Pr}(y_i=1|\hat{y}_i)=\mathbb{E}[y_i|\hat{y}_i].
\end{equation}
Taking expectation on both sides in Eq.~\eqref{eq:chain-exp}, and leveraging the low of total expectation, we get
\begin{equation}\label{eq:total-exp}
\mathbb{E}[\hat{y}_i]=\mathbb{E}[\mathbb{E}[y_i|\hat{y}_i]]=\mathbb{E}[y_i].
\end{equation}
Finally, we plug Eq.~\eqref{eq:total-exp}
into Eq.~\eqref{eq:gradient-expansion}:
\begin{equation}
    \begin{aligned}
&\mathbb{E}\left[\frac{\partial}{\partial \theta}L(\theta; (x_i, \hat{y}_i))\right] \\
&=\big(f_{\theta}(x_i) - \mathbb{E}[\hat{y}_i]\big)x_i \\
&=\big(f_{\theta}(x_i) - \mathbb{E}[y_i]\big)x_i \\
&\mathbb{E}\left[\frac{\partial}{\partial \theta}L(\theta; (x_i, y_i))\right].
    \end{aligned}
\end{equation}
Therefore we have proved that well-calibrated training dataset $\{x_i, \hat{y}_i\}$ is unbiased training of the model.
\end{proof}
\end{document}